\documentclass[11pt]{article}

\usepackage[T1]{fontenc}
\usepackage{lmodern}
\usepackage{microtype}
\usepackage[letterpaper,margin=1in]{geometry}
\usepackage{amsmath,amssymb,amsthm}
\usepackage[round]{natbib}
\usepackage{graphicx}
\usepackage{url}
\usepackage{booktabs}
\usepackage{longtable}
\usepackage[hidelinks]{hyperref}

\theoremstyle{plain}

\newtheorem{proposition}{Proposition}

\theoremstyle{definition}

\newcommand{\sectionref}[1]{Section~\ref{#1}}
\newcommand{\appendixref}[1]{Appendix~\ref{#1}}
\newcommand{\figureref}[1]{Figure~\ref{#1}}

\makeatletter
\newcommand{\floatconts}[3]{%
  \centering #3\par
  \begingroup
    \let\csnn@caption\caption
    \renewcommand{\caption}[1]{\csnn@caption{##1}\label{#1}}%
    #2%
  \endgroup}
\makeatother

\newenvironment{keywords}{\par\medskip\noindent\textbf{Keywords:}\ }{\par}

\newcommand{\Cl}{\mathrm{Cl}}
\newcommand{\Spin}{\mathrm{Spin}}

\newcommand{\Ad}{\mathrm{Ad}}

\newcommand{\FinVect}{\mathbf{FinVect}}
\newcommand{\End}{\mathrm{End}}
\newcommand{\Lap}{\mathcal{L}}
\newcommand{\rev}[1]{\widetilde{#1}}
\newcommand{\ip}[2]{\langle #1, #2 \rangle}
\newcommand{\grade}[2]{[ #1 ]_{#2}}
\newcommand{\norm}[1]{\lVert #1 \rVert}

\title{Clifford Sheaf Neural Networks}

\author{%
  Kotaro Kamiya\thanks{Corresponding author: \texttt{k.kamiya@syntheticgestalt.com}}
  \qquad
  Joel Nicholls\thanks{Work done while at SyntheticGestalt KK.}\\[4pt]
  SyntheticGestalt KK}
\date{}

\begin{document}

\maketitle

\begin{abstract}
We introduce the Clifford Sheaf Neural Network (CSNN), an equivariant sheaf neural network
for geometric graphs that places a Clifford algebra on each stalk of a cellular sheaf and
transports multivector features along edges. The canonical choice of restriction map for sheaves with algebra-valued stalks is algebra homomorphism. Adding the constraint of equivariance, the naive choice becomes versor conjugation. However, versor conjugation is expressively weak, so we drop algebra homomorphism and arrive at the K-term sandwich. The resulting sheaf Laplacian is positive semidefinite by
construction, needs no versor constraint, and still mixes grades. Our main contribution
characterizes the resulting family of restriction maps along three axes: which grades a map
couples, how much of the endomorphism space it reaches, and how well it is conditioned.
The K-term sandwich spans half of the endomorphism space, and in $\Cl(3,0,0)$ it corresponds to the maps that commute with
the central pseudoscalar. The number of terms controls expressivity.
CSNN is the reversion member, a first-order model by construction and the
grade-mixing corner of this family, developed as a sheaf construction for graph-level
equivariant regression.
\end{abstract}

\begin{keywords}
Clifford algebra, cellular sheaves, sheaf Laplacian, equivariance, geometric deep
learning, multivector representations
\end{keywords}

\section{Introduction}
\label{sec:intro}

Many learning problems live on geometric graphs, where a spatial position is assigned to each node.
Such problems reward architectures that respect the symmetries
of space. Two lines of work
each supply a powerful inductive bias. \emph{Clifford} (geometric) algebra represents
scalars, vectors, and higher-grade quantities as a single multivector and equips them
with an equivariant geometric product, so a network manipulates geometric features of
every grade under one operation \citep{ruhe2023cgenn,brehmer2023gatr}. \emph{Cellular sheaves} ask neighbours to agree only after a learned transport, not
directly. The sheaf Laplacian, which generalizes the graph Laplacian, encodes this and lets
message passing move beyond homophily \citep{hansen2020sheaf,bodnar2022nsd}.

The two can be combined by placing a Clifford algebra on each stalk and transporting
multivectors along edges. But the obvious construction runs into two difficulties. Suppose
the restriction maps are Clifford-algebra automorphisms, i.e.\ conjugation by a group
element. First, the sheaf Laplacian behaves well only when the restriction maps are
co-isometries, and that holds only for \emph{versors}, an awkward constraint to maintain
during learning. Second, conjugation preserves grades: transport cannot move information
between them, and wastes much of what the multivector offers.

We resolve both by changing the learnable family of restriction maps. Generally, the K-term sandwich, and as a specific case
the \emph{normalized reversion sandwich} $s_c(x)=c\,x\,\rev{c}/q(c)$ with $c$ an unconstrained
multivector. Applied on both sides of each edge, it gives a sheaf
Laplacian that stays positive semidefinite \emph{by construction,
with no versor constraint}, and still transports information across grades. We call the
resulting architecture the Clifford Sheaf Neural Network (CSNN).

Our second contribution characterizes restriction maps along
three axes: grade coupling, reach within the endomorphism space, and conditioning. CSNN uses
the reversion member, which mixes scalar and vector grades but stays confined to them and has
single-ratio conditioning. It is first-order by construction and, to our knowledge, the first
sheaf construction aimed at graph-level equivariant regression. The result is both a
principled construction and a map of its design space.

\section{Background}
\label{sec:background}

\subsection{Clifford Algebras}
\label{sec:clifford}

A Clifford algebra assembles several tensor grades into a single multivector. In
$\Cl(3,0,0)$, the grades are scalar, vector, bivector, and pseudoscalar, so a
node's scalar feature and its $3$D position live in one element, coupled equivariantly by the
geometric product.

Given a real vector space $V=\mathbb{R}^n$ with a quadratic form $Q$ of signature
$(p,q,r)$, the Clifford algebra $\Cl(p,q,r)$ is the quotient of the tensor algebra of
$V$ by the relations $v^2=Q(v)$ for all $v\in V$. Its multiplication is the \emph{geometric
product}. We denote the geometric product of multivectors $x$ and $y$ as $xy$.
On an orthogonal basis $\{e_1,\dots,e_n\}$ the defining relations read
\begin{equation}
  e_i e_j + e_j e_i = 2\,g_{ij},\qquad
  (g_{ij})=\mathrm{diag}(\underbrace{+1,\dots}_{p},\underbrace{-1,\dots}_{q},\underbrace{0,\dots}_{r}),
\end{equation}
and $\dim\Cl(p,q,r)=2^n$ with $n=p+q+r$. The algebra is graded,
\begin{equation}
  \Cl(p,q,r)=\bigoplus_{k=0}^{n}\Cl^{(k)}(p,q,r),\qquad \dim\Cl^{(k)}=\binom{n}{k},
\end{equation}
where $\Cl^{(k)}$ is spanned by the $k$-blades, which are products of $k$ orthogonal basis vectors.
We write $[x]_k$ for the projection of a multivector $x$ onto its grade-$k$ component. In particular, $[x]_0$ is its scalar
part.

\emph{Reversion} $\rev{x}$ reverses the order of vectors within each blade, acting as
$(-1)^{k(k-1)/2}$ on grade $k$. The \emph{grade involution} $\alpha(x)$ acts as $(-1)^k$ on
grade $k$, flipping the sign of the odd grades. Reversion defines the \emph{Clifford scalar
product}
\begin{equation}
  \ip{x}{y} := \grade{\rev{x}\,y}{0},
  \label{eq:scalar-product}
\end{equation}
the scalar part of $\rev{x}\,y$, with quadratic form $q(x):=\ip{x}{x}=\grade{\rev{x}\,x}{0}$.
This is a \emph{symmetric bilinear form}, not in general an inner product. In the Euclidean
$\Cl(3,0,0)$ it is a genuine inner product with the blades orthonormal. In
the degenerate $\Cl(3,0,1)$ the Clifford scalar product is only positive semidefinite, with a nontrivial radical
spanned by the blades containing the null generator $e_0$. Positive semidefiniteness is what
the sheaf Laplacian needs (\sectionref{sec:sheaves}), and we use the form throughout to
define adjoints.

\subsection{Cellular Sheaves and the Sheaf Laplacian}
\label{sec:sheaves}

\begin{figure}[t]
\floatconts
  {fig:cellposet}
  {\caption{The cell poset for a single edge, viewed as the base space. The two vertices lie
   below the edge they bound, and the arrows are the incidence relations along which
   restriction maps transport features.}}
  {\includegraphics[width=0.32\linewidth]{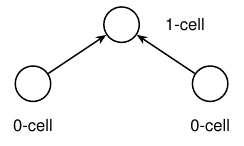}}
\end{figure}

We view an undirected graph as a cell poset with vertices below the edges they bound
(\figureref{fig:cellposet}).
A cellular sheaf assigns to each cell a vector space, its \emph{stalk}, and to each incident
vertex--edge pair a linear \emph{restriction map}. The admissible family of restriction maps is a modelling choice. Taking the target to be
the category of Clifford algebras would force the maps to be algebra automorphisms, which is
too restrictive, so we take the target to be $\FinVect$, the category of finite-dimensional
vector spaces. Restriction maps are then arbitrary linear maps between stalks, and the network exploits the multivector structure internally
rather than as part of the sheaf formalism.

The sheaf Laplacian is $\Lap=\delta^{\ast}\delta$, where the coboundary $\delta$ maps node
features to edge features,
\begin{equation}
  (\delta v)_e = f_{j\to e}(v_j) - f_{i\to e}(v_i),
  \label{eq:coboundary-generic}
\end{equation}
with $f_{k\to e}$ the restriction map from node $k$ into edge $e$. We take its adjoint
$\delta^{\ast}$ with respect to the Clifford scalar product on stalks,
$\ip{\delta v}{\omega}_E=\ip{v}{\delta^{\ast}\omega}_V$, where $\ip{\cdot}{\cdot}_V$ and
$\ip{\cdot}{\cdot}_E$ aggregate the per-stalk product over nodes and over oriented edges.
As a Gram operator the Laplacian is positive semidefinite by construction:
\begin{equation}
  \ip{v}{\Lap v}_V = \ip{\delta v}{\delta v}_E = \norm{\delta v}^2 \ge 0.
  \label{eq:psd}
\end{equation}
Two facts give~\eqref{eq:psd}. The Gram \emph{structure} $\Lap=\delta^{\ast}\delta$ yields
$\ip{v}{\Lap v}_V=\norm{\delta v}^2$ whenever the restriction maps admit an adjoint under the
stalk form, with no orthogonality or versor constraint. In the non-degenerate $\Cl(3,0,0)$
every linear map does; in the degenerate $\Cl(3,0,1)$, where an adjoint exists only for maps
preserving the radical, the normalized reversion sandwiches do (\sectionref{sec:laplacian}).
The \emph{non-negativity} then holds because the Clifford scalar product is positive
semidefinite in both signatures; the orthogonal maps of connection-Laplacian
sheaves~\citep{barbero2022sheaf} are one special case. The energy $\norm{\delta v}^2$ is the
sheaf Dirichlet energy~\citep{bodnar2022nsd}, measuring disagreement \emph{after transport},
so neighbours need agree only up to a learned transport, which carries sheaf models beyond
homophily.

\section{The Sandwich Restriction-Map Family}
\label{sec:theory}

With the sheaf Laplacian fixed (\sectionref{sec:sheaves}), the modelling choice is the
restriction map. We build restriction maps from the \emph{sandwich}: the single map
$x\mapsto a\,x\,b$ and, more generally, the $K$-term sum $\sum_{k=1}^{K}a_k\,x\,b_k$ with
multivector coefficients $a_k,b_k$. Every such map admits an adjoint under the Clifford
scalar product, so the sheaf Laplacian~\eqref{eq:psd} is positive semidefinite for the
whole family. The member we ultimately use is the \emph{normalized reversion sandwich}
\begin{equation}
  s_c(x) := c\,x\,\rev{c}\,/\,q(c),
  \label{eq:sandwich}
\end{equation}
a linear map in $x$ parameterized by a single multivector $c$, the \emph{restriction
element}, defined whenever $q(c)\neq0$, which is automatic in the positive-definite $\Cl(3,0,0)$. When $c$ is a versor (an element of
the Clifford group $\Gamma$), $s_c$ reduces to the grade-preserving conjugation $\Ad_c$. For a general $c$ it is not grade-preserving. This
section develops the family from the general sandwich down to $s_c$ along axes of grade
coupling, reachability, and conditioning; all proofs are in \appendixref{app:proofs}.

\subsection{Equivariance}
\label{sec:equivariance}

Sandwich maps are $\Spin$-equivariant when their factors transform with the input.
\begin{proposition}[Equivariance of the sandwich family]
\label{prop:equiv}
For any unit even versor $R\in\Spin$, any multivectors $a_k,b_k,x$, and the $K$-term
sandwich $T(x)=\sum_k a_k\,x\,b_k$,
\[
  \textstyle\sum_k (R a_k \rev{R})\,(R x \rev{R})\,(R b_k \rev{R}) = R\,T(x)\,\rev{R}.
\]
In particular the normalized reversion sandwich obeys
$s_{Rc\rev{R}}(Rx\rev{R}) = R\,s_c(x)\,\rev{R}$.
\end{proposition}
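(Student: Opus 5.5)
The plan is to use only two facts about a unit even versor $R\in\Spin$: the unit condition $\rev{R}R = R\rev{R} = 1$, and the fact that reversion is an anti-automorphism, $\rev{xy}=\rev{y}\,\rev{x}$. For the first claim, expand each summand of the left-hand side as the associative product
\[
(R a_k \rev{R})(R x \rev{R})(R b_k \rev{R}).
\]
The two inner pairs $\rev{R}R$ collapse to $1$, so the summand becomes $R\,a_k x b_k\,\rev{R}$. Summing over $k$ and using bilinearity of the geometric product to pull $R$ and $\rev{R}$ outside the sum gives $R\,T(x)\,\rev{R}$. This step is purely associativity plus linearity, and it needs no grade considerations.

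For the normalized reversion sandwich, apply the first part with $K=1$, $a_1 = c$ and $b_1=\rev{c}$. Two points remain to be checked.

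\textbf{Reversion commutes with conjugation.} We need $\rev{R c \rev{R}} = R\,\rev{c}\,\rev{R}$. By the anti-automorphism property and $\rev{\rev{R}}=R$,
\[
\rev{R c \rev{R}} = \rev{\rev{R}}\,\rev{c}\,\rev{R} = R\,\rev{c}\,\rev{R}.
\]
Hence the transformed restriction element $c' = Rc\rev{R}$ yields exactly the factors $a_1' = Rc\rev{R}$ and $b_1' = R\rev{c}\rev{R}$ to which the first part applies.

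\textbf{The normalizer is invariant.} We need $q(Rc\rev{R}) = q(c)$. Compute
\[
\rev{(Rc\rev{R})}\,(Rc\rev{R}) = R\,\rev{c}\,\rev{R}\,R\,c\,\rev{R} = R\,(\rev{c}c)\,\rev{R}.
\]
It then suffices to show that conjugation by $R$ preserves the scalar part: $\grade{R y \rev{R}}{0} = \grade{y}{0}$. I would argue this via cyclicity of the scalar part, $\grade{uv}{0}=\grade{vu}{0}$, which follows from blade-by-blade computation: $\grade{e_A e_B}{0}$ is nonzero only for $A=B$, and $e_Ae_A$ is a symmetric expression. Cyclicity gives
\[
\grade{R y\rev{R}}{0} = \grade{y\,\rev{R}R}{0} = \grade{y}{0}.
\]
An alternative route is to note that $\Ad_R$ is grade-preserving for versors, as stated before the proposition, and acts trivially on scalars. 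Combining the two points, $q(c')=q(c)\neq 0$, so $s_{c'}$ is defined whenever $s_c$ is. Dividing the $K=1$ identity by this common scalar gives $s_{Rc\rev{R}}(Rx\rev{R}) = R\,s_c(x)\,\rev{R}$.

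The main obstacle is the normalizer invariance, specifically justifying cyclicity of $\grade{\cdot}{0}$ in the degenerate signature $\Cl(3,0,1)$. I would handle it in one uniform way for all signatures: $e_Ae_B$ has a nonzero scalar part only when $A=B$, and then $e_Ae_A = e_Ae_A$ trivially matches on swapping the factors. Bilinearity extends the identity from blades to arbitrary multivectors. With this in hand, the rest of the argument is routine.
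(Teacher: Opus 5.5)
Your proof is correct and follows the paper's argument: you collapse the inner $\rev{R}R$ factors, use the anti-automorphism property to get $\rev{Rc\rev{R}}=R\,\rev{c}\,\rev{R}$, and then use the invariance $q(Rc\rev{R})=q(c)$. The only difference is that you justify this invariance via cyclicity of the scalar part, which holds blade by blade in the degenerate signature as well, or via grade preservation, whereas the paper simply asserts it.
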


In the layer, the network generates the restriction elements covariantly from the inputs
(\sectionref{sec:generate}), so the transport moves with the data. A CSNN layer composes
the sandwich transport, the sheaf Laplacian, and
$\Spin$-equivariant gates and feed-forward maps. As a composition of equivariant maps it
is equivariant, so the network is $SO(3)$-equivariant in $\Cl(3,0,0)$ and
$SE(3)$-equivariant in $\Cl(3,0,1)$.

\subsection{Three Axes: Grade Coupling, Reachability, and Conditioning}
\label{sec:family}

Every member of the sandwich family is equivariant, whatever its form
(\sectionref{sec:equivariance}). Which linear maps it can realize, once we lift the versor
constraint, is a separate question, answered by three axes: which grades a map couples, how
much of the endomorphism space it reaches, and how well it is conditioned. CSNN's reversion
map is one corner of this family, with conditioning set by a single ratio.

\begin{figure}[t]
\floatconts
  {fig:coupling}
  {\caption{Grade coupling of the sandwich family in $\Cl(3,0,0)$. Nodes are the grades
   $0$--$3$; an arc joins two grades that a map couples. Conjugation is grade-preserving; the
   reversion sandwich $s_c$ couples $\{0,1\}$ and $\{2,3\}$; the $\gamma$-sandwich couples
   $\{0,3\}$ and $\{1,2\}$; a general or $K$-term sandwich couples all six pairs. Reachable
   dimensions are listed in \figureref{fig:hierarchy}.}}
  {\includegraphics[width=\linewidth]{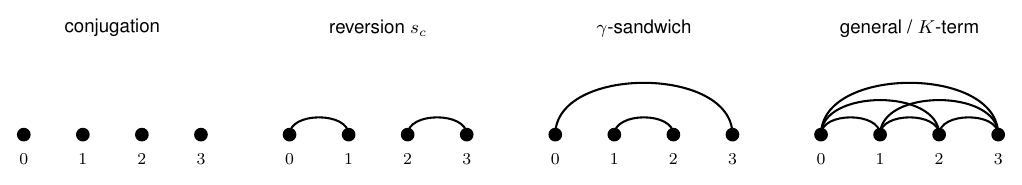}}
\end{figure}

\paragraph{Grade coupling.}
The first axis asks which grades a map can mix (\figureref{fig:coupling}), and three involutions answer it. Besides
reversion $\rev{x}=\beta(x)$ and the grade involution $\alpha$, $\Cl(3,0,0)$ carries the
Clifford conjugation $\gamma=\alpha\circ\beta$. Each
splits the algebra into $\pm1$ eigenspaces, grouping the grades into pairs: $\beta$ into
$\{0,1\}$ and $\{2,3\}$, $\gamma$ into $\{0,3\}$ and $\{1,2\}$, and $\alpha$ into
$\{0,2\}$ and $\{1,3\}$. A map preserving an involution's eigenspaces cannot couple
grades across the corresponding boundary.

\begin{proposition}[Grade confinement]
\label{prop:confine}
The normalized reversion sandwich $s_c$ preserves the $\beta$-eigenspaces for every $c$, as does any composition of
reversion sandwiches. In $\Cl(3,0,0)$ these eigenspaces are $\{0,1\}$ and $\{2,3\}$, so a
network seeded in grade $0$ stays confined to grades $\{0,1\}$ at every depth.
\end{proposition}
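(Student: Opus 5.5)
Since $q(c)$ is a nonzero scalar, it suffices to show that $x\mapsto c\,x\,\rev{c}$ commutes with reversion $\beta$. Then each eigenspace $\{x:\beta(x)=\pm x\}$ is mapped into itself.

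The key identity is that reversion is an algebra anti-automorphism, $\rev{(xy)}=\rev{y}\,\rev{x}$, and an involution, $\rev{\rev{c}}=c$. I would justify these first: reversion is induced on the tensor algebra by reversing word order, and the defining ideal generated by $v\otimes v-Q(v)$ is stable under that operation, so the map descends to $\Cl(p,q,r)$. It acts on a $k$-blade by reversing its $k$ factors. Reordering them back costs $k(k-1)/2$ transpositions of anticommuting orthogonal generators, which gives the sign $(-1)^{k(k-1)/2}$ used in the paper's definition. So this is the same map. Nothing here depends on the signature, so it holds in $\Cl(3,0,1)$ as well.

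With this in hand,
\[
\rev{(c\,x\,\rev{c})}=\rev{\rev{c}}\;\rev{x}\;\rev{c}=c\,\rev{x}\,\rev{c},
\]
that is, $\beta\circ s_c=s_c\circ\beta$. If $\beta(x)=\varepsilon x$ with $\varepsilon=\pm1$, then $\beta(s_c(x))=s_c(\beta(x))=\varepsilon\, s_c(x)$. Hence $s_c$ preserves both eigenspaces. Any composition of maps that commute with $\beta$ also commutes with $\beta$, which handles compositions of reversion sandwiches.

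For the $\Cl(3,0,0)$ statement, I would list the sign $(-1)^{k(k-1)/2}$ for $k=0,1,2,3$, which gives $+,+,-,-$. So the $+1$ eigenspace is grades $\{0,1\}$ and the $-1$ eigenspace is grades $\{2,3\}$. A grade-$0$ seed lies in the $+1$ eigenspace, and by induction on depth every feature stays there.

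The main obstacle is this depth claim, because a network contains more than sandwiches. The "network" has to be read as one whose transports are reversion sandwiches and whose remaining operations also commute with $\beta$:
\begin{itemize}
\item the Laplacian $\delta^\ast\delta$ is built from these maps, their adjoints, and scalar sums;
\item the adjoint of $s_c$ is again a reversion-type sandwich, since $\ip{s_c x}{y}=\grade{c\,\rev{x}\,\rev{c}\,y}{0}=\grade{\rev{x}\,\rev{c}\,y\,c}{0}$ by cyclicity of the scalar part, giving adjoint $y\mapsto \rev{c}\,y\,c/q(c)$;
\item scalar-weighted sums and grade-wise gates preserve grade-sum subspaces.
\end{itemize}
The scalar-part cyclicity, $\grade{xy}{0}=\grade{yx}{0}$, is the one auxiliary fact to verify. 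I would check it on pairs of blades. I would state the depth claim under exactly these hypotheses.
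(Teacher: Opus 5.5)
Your proof is correct and matches the paper's argument: both use that reversion is an involutive anti-automorphism, so $\rev{(c\,x\,\rev{c})}=c\,\rev{x}\,\rev{c}$, hence $s_c$ preserves each $\beta$-eigenspace and compositions inherit this. Your extra care with the depth claim, checking that the adjoint $\rev{c}\,y\,c/q(c)$, the Laplacian sums, and the scalar-gated grade-wise maps also commute with $\beta$, makes explicit what the paper compresses into ``composition preserves the property.''
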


The reversion sandwich thus couples only $\{0\!\leftrightarrow\!1\}$ and
$\{2\!\leftrightarrow\!3\}$: it mixes grades, unlike grade-preserving conjugation, but
not fully. A single general sandwich $a\,x\,b$ with independent $a,b$ preserves no
involution and couples all six grade pairs. This ``full grade mixing'' is a property of
one map's support, distinct from the reachability question we turn to next.

\paragraph{Reachable maps and the reachability ceiling.}
The second axis asks how much of the endomorphism space the family reaches. In $\Cl(3,0,0)$
the pseudoscalar $e_{123}$ is central and squares to $-1$, so $J(x)=e_{123}x$ satisfies
$J^2=-\mathrm{Id}$ and splits $\End_{\mathbb{R}}$ (dimension $64$) into two halves of dimension $32$: the
maps that \emph{commute} with $J$ and those that \emph{anticommute}.

\begin{proposition}[Reachability ceiling in $\Cl(3,0,0)$]
\label{prop:clin}
In $\Cl(3,0,0)$, every $K$-term sandwich $\sum_k a_k\,x\,b_k$ commutes with $J$, so the
entire sandwich family lies in the commuting half. Reaching the anticommuting half requires
the grade involution, $x\mapsto a\,\alpha(x)\,b$.
\end{proposition}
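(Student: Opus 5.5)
The first claim follows from the centrality of the pseudoscalar. I will write $I=e_{123}$ and begin by checking that $I$ is central in $\Cl(3,0,0)$. It suffices to test the generators: moving $e_i$ past $e_{123}$ costs two transpositions with the other two indices, one of which it passes trivially, so $e_i I = I e_i$. Centrality on all of the algebra then follows from bilinearity and associativity. The same computation gives $I^2=e_{123}e_{123}=-1$, so $J^2=-\mathrm{Id}$.

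For a single sandwich $T(x)=a\,x\,b$ I will compute
\[
T(Jx)=a\,I x\,b = I\,a x b = J(T(x)),
\]
using only centrality of $I$. Linearity then extends this to any $K$-term sum $\sum_k a_k x b_k$: since $TJ=JT$ holds term by term, it holds for the sum. This places the whole sandwich family in the commuting half. I will also record the dimension count behind the word ``half''. Because $J^2=-\mathrm{Id}$, $J$ makes $\Cl(3,0,0)\cong\mathbb{R}^8$ into $\mathbb{C}^4$. The operator $\Phi(T)=-JTJ$ is an involution on $\End_{\mathbb{R}}$, and its $+1$ and $-1$ eigenspaces are exactly the commuting and anticommuting maps, namely $\End_{\mathbb{C}}(\mathbb{C}^4)$ and the $\mathbb{C}$-antilinear maps. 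Each of these has real dimension $32$.

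For the second claim I need to show that $x\mapsto a\,\alpha(x)\,b$ anticommutes with $J$. Since $I$ has odd grade, $\alpha(I)=-I$, and $\alpha$ is an algebra automorphism, so $\alpha(Ix)=\alpha(I)\alpha(x)=-I\alpha(x)$. Therefore
\[
a\,\alpha(Ix)\,b=-I\,a\,\alpha(x)\,b,
\]
which is the anticommutation relation. I will read ``requires'' in two parts. The necessity half is already done, since nothing built from sandwiches alone can leave the commuting half. For the sufficiency half I want to show that sums of maps of the form $a\,\alpha(x)\,b$ fill the anticommuting half. I plan to use the fact that the span of all $x\mapsto axb$ is the image of $\Cl\otimes\Cl^{op}\to\End_{\mathbb{R}}(\Cl)$. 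Because $\Cl(3,0,0)\cong M_2(\mathbb{C})$ as a real algebra, this image is exactly $\End_{\mathbb{C}}$, which has dimension $32$. Composing on the right with the bijection $\alpha$ carries this commuting half onto a $32$-dimensional space of anticommuting maps, and so onto the entire anticommuting half.

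I expect the main obstacle to be this surjectivity step, i.e.\ showing that the span of sandwiches is all of the commuting half rather than only contained in it. I would first attempt it by the structural route: $\Cl(3,0,0)$ is a simple algebra over its centre $\mathbb{C}=\mathrm{span}\{1,I\}$, so Jacobson density (or Wedderburn) gives $\Cl\otimes_{\mathbb{C}}\Cl^{op}\cong\End_{\mathbb{C}}(\Cl)$, of complex dimension $16$ and real dimension $32$. If that turns out awkward to cite, I would instead exhibit the $32$ maps $x\mapsto e_A x e_B$ explicitly and verify that they span the commuting half, for instance by showing that their pairwise trace form is nondegenerate.
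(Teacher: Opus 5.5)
Your proof is correct, and its core is exactly the paper's argument: centrality of $e_{123}$ gives $a(e_{123}x)b=e_{123}(axb)$, linearity extends this to every $K$-term sum, and $\alpha(e_{123})=-e_{123}$, together with $\alpha$ being an automorphism, makes $x\mapsto a\,\alpha(x)\,b$ anticommute with $J$. Your extra material is also correct and goes beyond what the paper proves in this proposition: the $32+32$ dimension count, and the surjectivity argument via $\Cl\otimes_{\mathbb{C}}\Cl^{\mathrm{op}}\cong\End_{\mathbb{C}}(\Cl)$ followed by right composition with $\alpha$. The paper only asserts separately that sandwich sums fill the commuting half, with ceiling $32$ at $K=4$.
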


Within the reachable half the number of terms strictly controls reach: the reachable
dimension climbs $14,24,30,32$ in $\Cl(3,0,0)$ (ceiling $32$ at $K=4$) and
$31,60,87,112,\dots$ in $\Cl(3,0,1)$ (ceiling $128$ at $K=8$).

\paragraph{A signature-independent obstruction.}
Proposition~\ref{prop:clin} excludes $\alpha$ from the sandwich family in $\Cl(3,0,0)$ through
the centrality of the pseudoscalar, which fails in the degenerate $\Cl(3,0,1)$; a
grade-weighted trace nonetheless settles both signatures at once, since no finite sum
$\sum_k a_k\,x\,b_k$ can equal $\alpha$ (\appendixref{app:proofs}). So the sandwich family
fills exactly the pseudoscalar-commuting half of $\End_{\mathbb{R}}$ in $\Cl(3,0,0)$, and in
$\Cl(3,0,1)$ a direct computation gives $128$ of $256$, which $\alpha$ enlarges only to $192$. This concerns
which maps are realizable, not symmetry: every member is equivariant
(\sectionref{sec:equivariance}), with or without $\alpha$.

\paragraph{Chirality.}
Telling a shape from its mirror image needs a parity-odd grade-$3$ feature. By
Proposition~\ref{prop:confine} the reversion sandwich never populates grade $3$ from scalar
and vector inputs, so it is chirality-insensitive, whereas a general or $K$-term sandwich can
express chirality. This rests on the grade-coupling axis and does not arise for the
orientation-agnostic regression we target.

\begin{figure}[t]
\floatconts
  {fig:hierarchy}
  {\caption{Inclusion of restriction-map families in $\Cl(3,0,0)$ (rotations, $SO(3)$) and
   $\Cl(3,0,1)$ (rigid motions, $SE(3)$). Each edge denotes inclusion ($\subseteq$, lower in
   higher), and each number is the dimension of that family in $\End_{\mathbb{R}}$. Unnormalized sandwich dimensions are shown,
   normalizing fixes the overall scale and removes one dimension
   (reversion $7\!\to\!6$ in $\Cl(3,0,0)$, $16\!\to\!15$ in $\Cl(3,0,1)$). The braced
   families are equivariant, $s_{Rc\rev{R}}(Rx\rev{R})=R\,s_c(x)\,\rev{R}$. Sandwich sums
   fill exactly half of the linear maps; in $\Cl(3,0,0)$ these are the ones commuting with
   the central pseudoscalar.}}
  {\includegraphics[width=\linewidth]{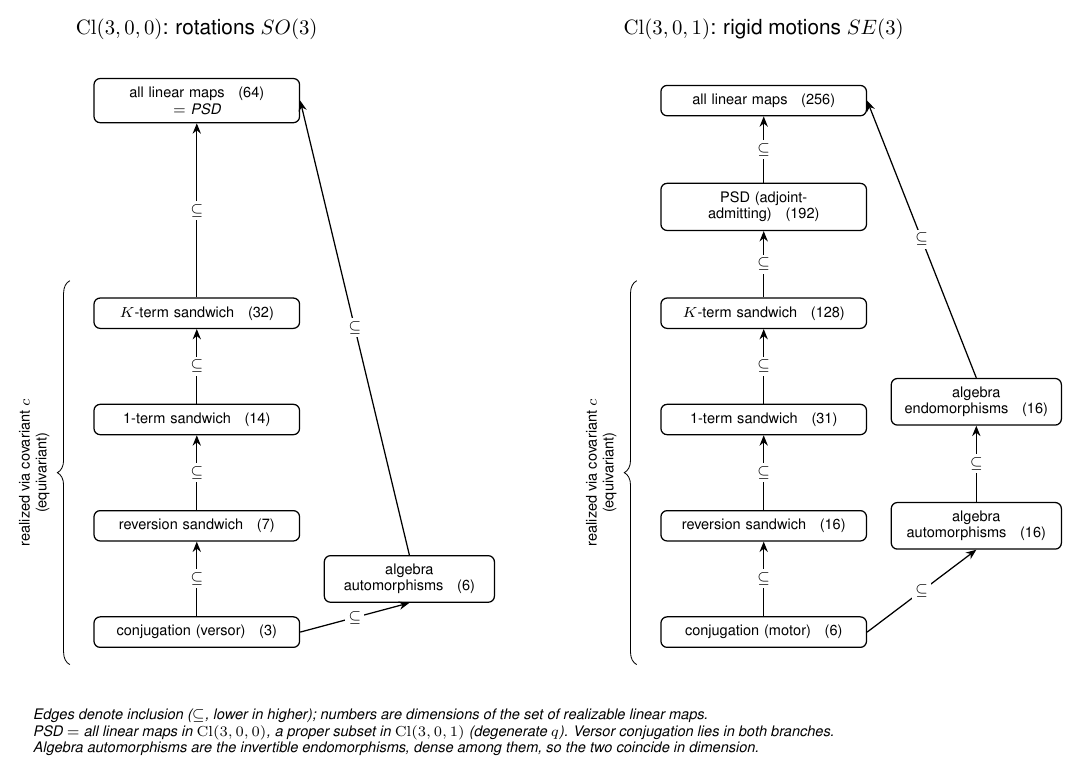}}
\end{figure}

\paragraph{Conditioning.}
Stability separates the family too. In $\Cl(3,0,0)$, write $w=\rev{c}c$ and
$r=\norm{w_{\mathrm{vec}}}/w_0$. Since $w$ lies in grades $\{0,1\}$, the reversion and
$\gamma$ self-terms have condition number $\kappa=\big((1+r)/(1-r)\big)^2$: a closed form in
one ratio, equal to $1$ at the versor limit $r=0$ and growing as $r\to1$. The general
sandwich, built from two independent elements, has no comparable single-parameter handle,
and in the degenerate $\Cl(3,0,1)$ the self-term $w$ carries a pseudoscalar component, so the
single-ratio form no longer applies.

Together these axes place CSNN precisely: the $K{=}1$ normalized reversion sandwich,
grade-confined to $\{0,1\}$ and single-ratio conditioned. It specializes the general family
$\sum_k a_k\,x\,b_k$ to $s_c$ by taking $K{=}1$ and tying the factors by reversion and
normalization ($a=c$, $b=\rev{c}/q(c)$), recovering~\eqref{eq:sandwich}; a versor $c$
collapses it further to conjugation $\Ad_c$. \figureref{fig:hierarchy} summarizes how the
families nest in both signatures.

\section{The Clifford Sheaf Neural Network Layer}
\label{sec:method}

We construct the CSNN layer by implementing the normalized reversion sandwich $s_c$ together with a
self-reference mechanism: the network generates each node's restriction element from that
node's own features (\sectionref{sec:generate}), and the sheaf Laplacian updates each node
against its own transported state (\sectionref{sec:laplacian}).

\subsection{Architecture Overview}
\label{sec:architecture}

CSNN operates on geometric graphs whose nodes carry features and $3$D positions, embedded
directly into the algebra (\figureref{fig:architecture}). Scalar node features occupy grade $0$,
the invariant part, in both signatures. The two signatures differ in how positions enter.
In $\Cl(3,0,0)$ a position occupies grade $1$, transforming as a Euclidean vector. In
$\Cl(3,0,1)$, the projective algebra whose symmetry group is $SE(3)$, we embed a point
$(x,y,z)$ as the grade-$3$ trivector
$e_{123}-x\,e_{023}+y\,e_{013}-z\,e_{012}$, the standard PGA point with homogeneous component
$e_{123}$ fixed to $1$, following \citet{brehmer2023gatr}.

\begin{figure}[t]
\floatconts
  {fig:architecture}
  {\caption{A single CSNN layer. Node multivector features are transported across each
   edge by normalized reversion sandwiches, combined through the sheaf
   Laplacian, and used to update node features and positions, gated by scalar
   invariants.}}
  {\includegraphics[width=\linewidth]{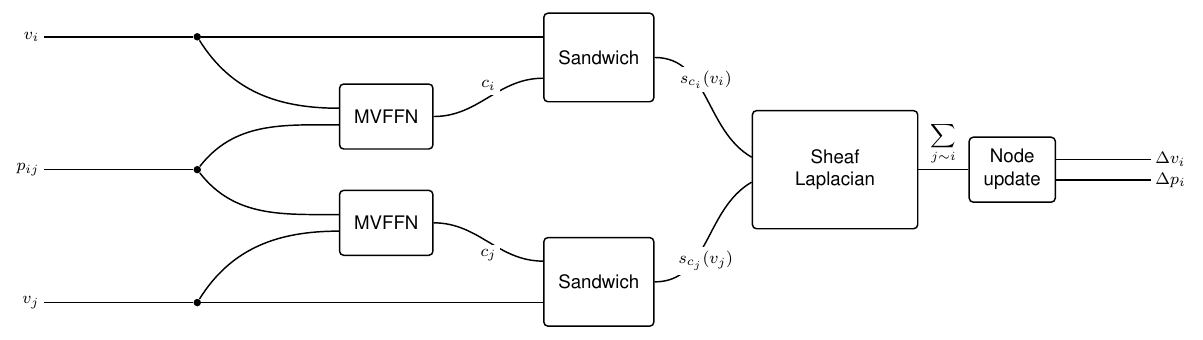}}
\end{figure}

\subsection{Generating the Restriction Element}
\label{sec:generate}

The restriction elements are not free parameters. The layer generates them from the node and edge features by an abstract map $\phi$, so the
transport is both learned and $\Spin$-equivariant. For an edge $\{i,j\}$ we write the relative position
\begin{equation}
  p_{ij} = p_j - p_i,
\end{equation}
and for each endpoint $k\in\{i,j\}$, set
\begin{equation}
  c_k = \phi\big(L_v\,v_k + L_e\,p_{ij}\big),
  \label{eq:generate}
\end{equation}
where $L_v,L_e$ are equivariant multivector linear maps and $\phi:\Cl\to\Cl$ is any
\emph{$\Spin$-equivariant} map, $\phi(R\,x\,\rev{R})=R\,\phi(x)\,\rev{R}$ for every
$R\in\Spin$. Each restriction element is then covariant, $c_k\mapsto R\,c_k\,\rev{R}$, the
condition (Proposition~\ref{prop:equiv}) that makes the transport $s_{c_k}$ equivariant. Each
$c_k$ uses only the node's \emph{own} feature $v_k$ and the edge geometry $p_{ij}$, a
translation-invariant quantity read directly from the configuration. We never form a
cross-stalk feature difference such as $v_j-v_i$: features in different stalks are comparable
only after transport, so a raw difference would be the meaningless coboundary of the trivial
sheaf.

CSNN builds $\phi$ from \emph{multivector neurons} (MVN)~\citep{liu2024mvn}: grade-wise
linear maps~\citep{ruhe2023cgenn} composed around a gated nonlinearity. Every primitive is
$\Spin$-equivariant, since grade-wise linear maps commute with the rotor action and the gate
uses only invariant scalars, so $\phi$ is $\Spin$-equivariant and the same primitives build
the message and node-update maps. \appendixref{app:mvn} gives the explicit block.

\subsection{The Sheaf Laplacian}
\label{sec:laplacian}

CSNN uses the normalized reversion sandwich~\eqref{eq:sandwich} as its restriction map on
\emph{both} sides of each edge. For an edge $e=\{i,j\}$,
\begin{equation}
  f_{i\to e} = s_{c_i},\qquad f_{j\to e} = s_{c_j},
\end{equation}
with restriction elements $c_i,c_j$ produced from the node and edge features
by~\eqref{eq:generate} (\sectionref{sec:generate}). Once an orientation is fixed, the general
coboundary~\eqref{eq:coboundary-generic} specialises to
$(\delta v)_e = s_{c_j}(v_j) - s_{c_i}(v_i)$.
The adjoint of the normalized reversion sandwich under the Clifford scalar
product~\eqref{eq:scalar-product} is
\begin{equation}
  s_c^{\ast}(y) = \rev{c}\,y\,c\,/\,q(c),
  \label{eq:adjoint}
\end{equation}
which yields the boundary
$(\delta^{\ast}\omega)_k=\sum_{e\ni k}[e{:}k]\,s_{c_k}^{\ast}(\omega_e)$, where the
incidence sign $[e{:}k]$ is $+1$ at the head of $e$ and $-1$ at the tail. The incidence
signs cancel in the composition $\Lap=\delta^{\ast}\delta$, so the sheaf Laplacian is
independent of the chosen orientations, and at each node
\begin{equation}
  (\delta^{\ast}\delta v)_i = \sum_{j\sim i} s_{c_i}^{\ast}\!\big(s_{c_i}(v_i) - s_{c_j}(v_j)\big).
  \label{eq:laplacian}
\end{equation}
The layer uses the negative of this Laplacian (``neighbour minus self'') as its message;
\appendixref{app:expanded} writes the self and cross terms out fully as nested sandwiches.
Two special cases anchor the construction. With identity restriction elements,
\eqref{eq:laplacian} reduces to the standard graph Laplacian $\sum_{j\sim i}(v_i-v_j)$; with
versor elements ($c\in\Gamma$), $s_c$ is conjugation, the self term is the identity, and CSNN
recovers a grade-preserving connection Laplacian of the kind used by \citet{barbero2022sheaf}.
For general elements CSNN moves beyond both: the explicit adjoint~\eqref{eq:adjoint} realizes
the Gram form $\Lap=\delta^{\ast}\delta$~\eqref{eq:psd}, so the Laplacian stays exactly
positive semidefinite with no versor constraint even as the transport mixes grades
(\sectionref{sec:family}).

\section{First-Order Representational Placement}
\label{sec:firstorder}

Where does CSNN sit among equivariant models? Grade confinement fixes the answer. Under the
rotation action the four grades of $\Cl(3,0,0)$ carry familiar irreducibles: grades $0$ and
$3$ are spin-$0$ (invariant), grades $1$ and $2$ spin-$1$. CSNN thus operates entirely with
$L{=}0$ and $L{=}1$ representations and carries no rank-two tensors, placing it with
scalar--vector models such as PaiNN~\citep{schutt2021painn} rather than higher-order tensor
models such as Equiformer~\citep{liao2023equiformer}. By Proposition~\ref{prop:confine} this
is structural, not incidental: the reversion CSNN is a first-order model by construction. This
grade analysis is specific to $\Cl(3,0,0)$; in $\Cl(3,0,1)$ reversion preserves the
$\beta$-eigenspaces $\{0,1,4\}$ and $\{2,3\}$, with positions (grade $3$) in the latter.

\section{Related Work}
\label{sec:related}

\paragraph{Clifford and geometric-algebra networks.}
Many architectures compute with multivector features and the geometric product: Clifford
group-equivariant networks \citep{ruhe2023cgenn}, geometric Clifford algebra networks
\citep{ruhe2023gcan}, Clifford neural layers for PDEs \citep{brandstetter2023clifford}, the
geometric algebra transformer \citep{brehmer2023gatr}, and multivector neurons
\citep{liu2024mvn}. These fix the message-passing form or constrain transformations to the
Clifford group; CSNN instead makes edge-wise transport a learned, versor-free restriction
map inside a sheaf Laplacian.

\paragraph{Sheaf neural networks.}
Sheaf neural networks \citep{hansen2020sheaf} and neural sheaf diffusion
\citep{bodnar2022nsd} attach real vector-space stalks to a graph and learn restriction maps;
connection Laplacians \citep{barbero2022sheaf} force these orthogonal, all for node-level
tasks. CSNN gives the stalks Clifford structure and normalized reversion sandwiches
that lift orthogonality while keeping an exact positive-semidefinite Laplacian, and targets
graph-level regression, where the sheaf-Laplacian kernel becomes a genuine constraint
(\sectionref{sec:conclusion}).

\paragraph{Equivariant models for molecules.}
Scalar--vector ($L{=}1$) models such as EGNN \citep{satorras2021egnn} and PaiNN
\citep{schutt2021painn}, and first-order attention models such as the equivariant
transformer \citep{tholke2022torchmdnet}, share CSNN's representational class; the
higher-order tensor models SEGNN \citep{brandstetter2022segnn}, tensor field networks
\citep{thomas2018tfn}, MACE \citep{batatia2022mace}, and Equiformer
\citep{liao2023equiformer} use $L{\ge}2$ features.

\paragraph{Gauge equivariance and topological message passing.}
Reading restriction maps as parallel transport connects CSNN to gauge-equivariant
networks \citep{cohen2019gauge,dehaan2021gauge}, and its cell-poset formulation to
simplicial and topological message passing
\citep{bodnar2021weisfeiler,eijkelboom2023simplicial,liu2024csmpn}.

\section{Conclusion}
\label{sec:conclusion}

CSNN combines Clifford multivector stalks with a sheaf Laplacian, built from normalized reversion sandwiches and positive semidefinite
by construction, with no versor constraint.
Characterizing the sandwich family shows that these maps span exactly half of
$\End_{\mathbb{R}}$, expressivity grows with the number of terms, and
CSNN's reversion member is the first-order, single-ratio-conditioned corner of the family.

A structural caveat concerns the sheaf-Laplacian kernel, where transported configurations
agree and the relational signal vanishes. Since CSNN only approximates diffusion, this bounds the linear core rather than the model.

\bibliographystyle{plainnat}
\bibliography{references}

\appendix

\section{Notation and Definitions}
\label{app:glossary}

For reference, this appendix collects the notation and main definitions used in the paper,
grouped by topic. All conventions match the body.

\begin{longtable}{@{}l p{0.24\linewidth} p{0.51\linewidth}@{}}
\toprule
\textbf{Symbol} & \textbf{Term} & \textbf{Definition}\\
\midrule
\endhead
\multicolumn{3}{@{}l}{\emph{Clifford algebra}}\\*
\cmidrule(r){1-3}
$\Cl(p,q,r)$ & Clifford algebra & Signature $(p,q,r)$ on $\mathbb{R}^n$, $n=p+q+r$: the quotient of the tensor algebra by $v^2=Q(v)$; real dimension $2^n$.\\
$\Cl(3,0,0)$ & Euclidean case & $n=3$, dimension $8$; symmetry group $SO(3)$.\\
$\Cl(3,0,1)$ & projective case, PGA & $n=4$, dimension $16$; symmetry group $SE(3)$; degenerate scalar product with an $8$-dimensional radical.\\
$g_{ij}$ & metric & Defined by $e_ie_j+e_je_i=2g_{ij}$.\\
$\Cl^{(k)}$ & grade-$k$ subspace & Spanned by the $k$-blades, the products of $k$ orthogonal basis vectors; dimension $\binom{n}{k}$.\\
$[x]_k$ & grade projection & The grade-$k$ component of $x$; $[x]_0$ is the scalar part.\\
$\omega$ & pseudoscalar & The top-grade blade: $e_{123}$ in $\Cl(3,0,0)$, the null grade-$4$ blade in $\Cl(3,0,1)$.\\
\addlinespace
\multicolumn{3}{@{}l}{\emph{Involutions and the scalar product}}\\*
\cmidrule(r){1-3}
$\alpha$ & grade involution & Acts as $(-1)^k$ on grade $k$.\\
$\rev{x}=\beta(x)$ & reversion & Acts as $(-1)^{k(k-1)/2}$ on grade $k$; an anti-automorphism.\\
$\gamma$ & Clifford conjugation & The composition $\gamma=\alpha\circ\beta$.\\
$\ip{x}{y}$ & Clifford scalar product & $\ip{x}{y}=\grade{\rev{x}\,y}{0}$; a symmetric bilinear form, positive definite in $\Cl(3,0,0)$ and positive semidefinite in $\Cl(3,0,1)$.\\
$q(x)$ & quadratic form & $q(x)=\ip{x}{x}$.\\
\addlinespace
\multicolumn{3}{@{}l}{\emph{Groups and equivariance}}\\*
\cmidrule(r){1-3}
 & versor & A multivector formed as the geometric product of invertible vectors, each a non-null grade-$1$ element with inverse $v^{-1}=v/q(v)$.\\
$\Gamma$ & Clifford group & The group of versors.\\
$\Spin$ & spin group & The unit even versors; acts by $x\mapsto Rx\rev{R}$ with $\rev{R}R=1$.\\
$\Ad_c$ & conjugation & $x\mapsto c\,x\,c^{-1}$; grade-preserving. $s_c$ reduces to $\Ad_c$ when $c$ is a versor.\\
$SO(3),\,SE(3)$ & rotation, rigid-motion groups & Realized in $\Cl(3,0,0)$ and $\Cl(3,0,1)$ respectively.\\
\addlinespace
\multicolumn{3}{@{}l}{\emph{Restriction maps}}\\*
\cmidrule(r){1-3}
$a\,x\,b$ & sandwich map & The single sandwich; the $K$-term sandwich is $\sum_{k=1}^K a_k\,x\,b_k$.\\
$s_c$ & normalized reversion sandwich & $s_c(x)=c\,x\,\rev{c}/q(c)$, the restriction map CSNN uses; $c$ is the \emph{restriction element} (\sectionref{sec:theory}).\\
$s_c^{\ast}$ & adjoint & $s_c^{\ast}(y)=\rev{c}\,y\,c/q(c)$, the adjoint of $s_c$ under $\ip{\cdot}{\cdot}$.\\
$J$ & pseudoscalar multiplication & $J(x)=e_{123}x$ in $\Cl(3,0,0)$; $J^2=-\mathrm{Id}$.\\
$\End_{\mathbb{R}}$ & endomorphism algebra & Real-linear maps of the stalk; dimension $64$ for $\Cl(3,0,0)$, $256$ for $\Cl(3,0,1)$.\\
$w,\,r,\,\kappa$ & conditioning & For the reversion self-term in $\Cl(3,0,0)$: $w=\rev{c}c$, $r=\norm{w_{\mathrm{vec}}}/w_0$, and $\kappa=((1+r)/(1-r))^2$ (\sectionref{sec:family}).\\
\addlinespace
\multicolumn{3}{@{}l}{\emph{Sheaf machinery}}\\*
\cmidrule(r){1-3}
 & cellular sheaf & Assigns a stalk to each cell and a restriction map to each incident vertex--edge pair (\sectionref{sec:sheaves}).\\
 & stalk & The vector space attached to a cell.\\
$f_{k\to e}$ & restriction map & The linear map from node $k$'s stalk into edge $e$'s stalk.\\
$\delta$ & coboundary & $(\delta v)_e=f_{j\to e}(v_j)-f_{i\to e}(v_i)$.\\
$\Lap$ & sheaf Laplacian & $\Lap=\delta^{\ast}\delta$; positive semidefinite by construction.\\
 & Dirichlet energy & $\ip{v}{\Lap v}=\norm{\delta v}^2$; the disagreement between node features after transport.\\
\addlinespace
\multicolumn{3}{@{}l}{\emph{Layer components}}\\*
\cmidrule(r){1-3}
MVN & multivector neuron & Multivector-valued unit of \citet{liu2024mvn}: a grade-preserving linear map with a gated nonlinearity.\\
MVFFN & feed-forward block & CSNN's choice for the generator $\phi$: an MVN-MLP-style block $L_2(\sigma(L_1 x))$ (\sectionref{sec:generate}).\\
$L$ & multivector linear map & One shared scalar weight per grade, no bias; grade-preserving and equivariant.\\
$L{=}0,\,L{=}1$ & spin-$0$, spin-$1$ & The irreducibles CSNN uses; the model is first-order (\sectionref{sec:firstorder}).\\
\bottomrule
\end{longtable}

\section{Proofs}
\label{app:proofs}

\begin{proof}[Proof of Proposition~\ref{prop:equiv}]
Since $\rev{R}R=1$, each term satisfies
\[
  (Ra_k\rev{R})(Rx\rev{R})(Rb_k\rev{R}) = R\,a_k x b_k\,\rev{R},
\]
and summing over $k$ preserves the identity. For $s_c$, reversion is an anti-automorphism, so
$\rev{Rc\rev{R}}=R\,\rev{c}\,\rev{R}$, and $q$ is $\Spin$-invariant, $q(Rc\rev{R})=q(c)$; hence
\[
  \begin{aligned}
    s_{Rc\rev{R}}(Rx\rev{R})
    &= Rc\rev{R}\,Rx\rev{R}\,R\rev{c}\rev{R}/q(c)\\
    &= R\,c\,x\,\rev{c}\,\rev{R}/q(c) = R\,s_c(x)\,\rev{R}.
  \end{aligned}
\]
\end{proof}

\begin{proof}[Proof of Proposition~\ref{prop:confine}]
For $v$ with $\beta(v)=\pm v$, $\beta(c\,v\,\rev{c})=\beta(\rev{c})\beta(v)\beta(c)
= c(\pm v)\rev{c}=\pm\,c\,v\,\rev{c}$, so $s_c(v)$ lies in the same eigenspace; composition
preserves the property.
\end{proof}

\begin{proof}[Proof of Proposition~\ref{prop:clin}]
Since $e_{123}$ is central, $a(e_{123}x)b=e_{123}(axb)$, so $x\mapsto axb$ commutes with $J$,
and so does any sum of such maps. Since $\alpha(e_{123})=-e_{123}$, the map
$x\mapsto a\,\alpha(x)\,b$ anticommutes with $J$ instead.
\end{proof}

\paragraph{$\alpha$ is not a finite sandwich sum.}
For $T\in\End(\Cl)$ the functional $\operatorname{tr}(\alpha\,T)$ is linear in $T$. A short
computation shows that, for a single sandwich $S:x\mapsto a\,x\,b$,
$\operatorname{tr}(\alpha\,S)$ depends only on the pseudoscalar coefficients of $a$ and $b$:
it is a multiple of $a_\omega b_\omega$ (with $\omega$ the pseudoscalar), and that multiple
vanishes when $n$ is odd and is proportional to $q(\omega)$ when $n$ is even. In both our
signatures it is therefore $0$: $\Cl(3,0,0)$ has odd $n=3$, and in $\Cl(3,0,1)$ the
pseudoscalar is null. By linearity the functional vanishes on every sandwich sum, whatever
its length; but $\operatorname{tr}(\alpha\,\alpha)=\operatorname{tr}(\mathrm{Id})=\dim\Cl\neq0$.
Hence no finite sum $\sum_k a_k\,x\,b_k$ can equal $\alpha$, in either signature.

\section{Expanding the Sheaf Laplacian in the Restriction Elements}
\label{app:expanded}

Substituting the normalized reversion sandwich $s_c(x)=c\,x\,\rev{c}/q(c)$ and its adjoint
$s_c^{\ast}(y)=\rev{c}\,y\,c/q(c)$ into the node Laplacian~\eqref{eq:laplacian} yields the
nested-sandwich form~\eqref{eq:expanded}. On an edge $\{i,j\}$ the two transports are
\begin{equation*}
  s_{c_i}(v_i)=\frac{c_i\,v_i\,\rev{c}_i}{q(c_i)}, \qquad
  s_{c_j}(v_j)=\frac{c_j\,v_j\,\rev{c}_j}{q(c_j)}.
\end{equation*}
Apply the adjoint $s_{c_i}^{\ast}(y)=\rev{c}_i\,y\,c_i/q(c_i)$ to each, and use associativity
of the geometric product to drop the inner brackets:
\begin{align*}
  s_{c_i}^{\ast}\!\big(s_{c_i}(v_i)\big)
    &= \frac{\rev{c}_i\,\big(c_i\,v_i\,\rev{c}_i\big)\,c_i}{q(c_i)\,q(c_i)}
     = \frac{\rev{c}_i\,c_i\,v_i\,\rev{c}_i\,c_i}{q(c_i)^2},\\[2pt]
  s_{c_i}^{\ast}\!\big(s_{c_j}(v_j)\big)
    &= \frac{\rev{c}_i\,\big(c_j\,v_j\,\rev{c}_j\big)\,c_i}{q(c_i)\,q(c_j)}
     = \frac{\rev{c}_i\,c_j\,v_j\,\rev{c}_j\,c_i}{q(c_i)\,q(c_j)}.
\end{align*}
Because $s_{c_i}^{\ast}$ is linear,
$s_{c_i}^{\ast}\!\big(s_{c_i}(v_i)-s_{c_j}(v_j)\big)$ is the difference of these two terms.
Summing over the neighbours $j\sim i$ gives the nested-sandwich form
\begin{equation}
  (\delta^{\ast}\delta v)_i = \sum_{j\sim i}
    \left[\frac{\rev{c}_i\,c_i\,v_i\,\rev{c}_i\,c_i}{q(c_i)^2}
        - \frac{\rev{c}_i\,c_j\,v_j\,\rev{c}_j\,c_i}{q(c_i)\,q(c_j)}\right].
  \label{eq:expanded}
\end{equation}

\section{The Multivector-Neuron Block}
\label{app:mvn}

For $\phi$ we take a multivector feed-forward block, MVFFN, an MVN-MLP-style composition of
two grade-wise linear maps around a gated nonlinearity $\sigma$, with $L_1$ widening and $L_2$
projecting,
\begin{equation}
  \mathrm{MVFFN}(x)=L_2\big(\sigma(L_1 x)\big).
\end{equation}
A grade-wise linear map applies one shared scalar weight per grade with no bias and commutes
with the rotor action. For $\sigma$ we use a sigmoid-smoothed form of the MVN rejection
nonlinearity: with $q=L_q x$ and $k=L_k x$,
\begin{equation}
  \sigma(x)=\tau\,q+(1-\tau)\Big(q-\tfrac{\ip{q}{k}}{\norm{k}^2}\,k\Big),
  \qquad \tau=\mathrm{sigmoid}(\ip{q}{k}).
\end{equation}
The gates $\ip{q}{k}$ and $\norm{k}^2$ are invariant scalars multiplying equivariant
multivectors, so $\sigma$ is equivariant. Sums and compositions of equivariant maps stay
equivariant, so $\phi=\mathrm{MVFFN}$ is $\Spin$-equivariant.

\end{document}